\documentclass{article} %
\usepackage{iclr2026_conference,times}

\usepackage{amsmath,amsfonts,bm}

\def\eqref#1{equation~\ref{#1}}

\def\1{\bm{1}}

\DeclareMathAlphabet{\mathsfit}{\encodingdefault}{\sfdefault}{m}{sl}
\SetMathAlphabet{\mathsfit}{bold}{\encodingdefault}{\sfdefault}{bx}{n}

\usepackage{hyperref}
\usepackage{url}
\usepackage{mathtools}
\usepackage{algorithm}    
\usepackage{algpseudocode} 
\usepackage[most]{tcolorbox} %
\usepackage{xcolor} %

\newtcolorbox{propositionbox}[1][]{
    colback=gray!12,          %
    colframe=black!15,        %
    boxrule=0.5pt,            %
    boxsep=3pt,
    left=0pt,
    right=0pt,
    top=2pt,
    bottom=2pt,
    arc=3pt,                  %
    #1
}

\usepackage{amsmath,amsthm,amssymb}
\usepackage{graphicx}
\usepackage{adjustbox}
\usepackage{booktabs}
\usepackage{ragged2e}
\usepackage{multirow}
\usepackage{makecell} 
\usepackage{siunitx} 
\usepackage{hyperref}

\newtheorem{theorem}{Theorem}
\newtheorem{lemma}[theorem]{Lemma}

\title{ One step further with Monte-Carlo sampler to guide diffusion better

}

\author{
Minsi Ren, Wenhao Deng, Ruiqi Feng, Tailin Wu \thanks{Corresponding author} \\
AI for Scientific Simulation and Discovery Lab, Westlake University \\
\texttt{\{renminsi, wutailin\}@westlake.edu.cn} \\
}

\iclrfinalcopy %
\begin{document}

\maketitle

\begin{abstract}
Stochastic differential equation (SDE)-based generative models have achieved substantial progress in conditional generation via \textbf{training-free} differentiable loss-guided approaches. However, existing methodologies utilizing posterior sampling typically confront a substantial estimation error, which results in inaccurate gradients for guidance and leading to inconsistent generation results. To mitigate this issue, we propose that performing an \textcolor{blue}{\textbf{a}}dditional \textcolor{blue}{\textbf{b}}ackward denoising step and \textcolor{blue}{\textbf{M}}onte-Carlo \textcolor{blue}{\textbf{s}}ampling (ABMS) can achieve better guided diffusion, which is a plug-and-play adjustment strategy. To verify the effectiveness of our method, we provide theoretical analysis and propose the adoption of a \textit{dual-focus evaluation framework}, which further serves to highlight the critical problem of \textit{cross-condition interference} prevalent in existing approaches. We conduct experiments across various task settings and data types, mainly including conditional online handwritten trajectory generation, image inverse problems (inpainting, super resolution and gaussian deblurring) molecular inverse design and so on. Experimental results demonstrate that our approach can be effectively used with higher order samplers and consistently improves the quality of generation samples across all the different scenarios. Our code is available at  \url{https://github.com/AI4Science-WestlakeU/ABMS}.

\end{abstract}

\section{Introduction}

In recent years, with the rapid advancement of diffusion model techniques~\citep{sohl2015deep, ho2020denoising, song2021score,song2019generative}, conditional diffusion models have emerged as a powerful paradigm in generative AI for solving generative tasks such as high-resolution image synthesis, image restoration, text-to-image generation, 3D shape modeling, molecular design and so on~~\citep{nichol2021diffusion,rombach2022high,zhang2023adding,red-diff}. By incorporating conditional information (\textit{e.g.}, text descriptions, class labels, degraded images, specific molecular property) into the diffusion process, these models enable fine-grained control over generated outputs. However, despite significant attention, conditional generation still faces a series of challenges in terms of cost and generalizability: typical conditional generation approaches such as classifier guidance~\citep{nichol2021diffusion} and classifier-free guidance~\citep{ho2022classifier} either require additional task-specific training of diffusion models or necessitate training extra noise-compatible conditional discriminators, thus limiting the range of applications. Moreover, for inverse problems that require highly precise conditions, such as the quantum numerical attributes of molecules, the classifier-free paradigm is not applicable.

In this context, a highly versatile approach is the \textit{training-free} guidance  methods~\citep{wallace2023end,chung2022improving, lugmayr2022repaint,dou2024diffusion, cardoso2023monte, feng2023score} for conditional generation, using off-the-shelf loss guidance during denoising to achieve satisfactory results. These approaches avoid task-specific training or extra conditional networks by directly leveraging pre-defined loss signals, enabling plug-and-play conditional generation across diverse scenarios. Among these methods, DPS~\citep{chung2022diffusion} is a prominent approach leveraging diffusion models to tackle inverse problems, with a series of subsequent improvements. Recent studies~\citep{he2023manifold,yang2024guidance} have made some improvements by enforcing manifold preservation during the guidance process, thereby enabling the use of larger guidance step sizes in denoising. This advancement not only accelerates sampling efficiency but also enhances the stability of conditional generation, fostering a stronger alignment between theoretical foundations and practical utility.

Nevertheless, despite recent progress, we observe that the majority of existing methods using the gradient direction delivered by the plain DPS formulation as guidance, suffer from systematically biased gradients. Empirically, \textit{guiding the sample toward one condition frequently perturbs other conditions that are intended to remain decoupled}, revealing an inherent cross-talk implicit in the gradient estimate. Motivated by this observation, we argue that the evaluation of a guidance method must simultaneously account for two facets:  
(i) the degree of alignment between the generated samples and the target condition, and  
(ii) the preservation of global properties, \textit{e.g.}, FID for images or molecular stability in drug design.  
Experiments exhibit the trade-off in practice: as the guidance weight increases, compliance with the specified condition improves at the systematic expense of sample quality, manifested by rising FID or diminished molecular stability.  
Consequently, reporting only alignment metrics paints an incomplete picture and risks selecting operating points that violate downstream requirements.

Furthermore, we analyze the root cause of the problems in existing methods: the large bias in estimating the conditional expectation. To address this issue, we propose ABMS, a strategy that effectively controls estimation bias through Monte Carlo sampling, yielding more accurate guidance gradients. We demonstrate its effectiveness across multiple tasks and data types. Our contributions are summarized as follows:
\begin{itemize}
\item We highlight the limitations of previous methods: the significant estimation error typically results in imprecise guidance gradient, leading to inconsistent generation outcomes.
\item We advocate for the dual-focus evaluation framework to better assess guidance-based methods and reveal the severe issue of cross-condition interference in existing methods.
\item  We analyze the source of the estimation error and propose a simple, plug-and-play improvement strategy ABMS to mitigate its impact and also provide theoretical support.
\item Experiments on various tasks and data types demonstrate the generality and effectiveness of the proposed strategy.
\end{itemize}

\section{Related Work}
\paragraph{Training-free conditional diffusion sampling.}
In recent years, the technology of training-free conditional sampling in diffusion models~\citep{wang2022zero, kawar2022denoising, chung2022diffusion, chung2022improving, lugmayr2022repaint} has witnessed rapid development~\citep{ dou2024diffusion, cardoso2023monte, feng2023score, janati2024divide}.
Essentially, these methods enable conditional generation under any differentiable constraint. For example, Red-Diff~\citep{red-diff} casts the condition as a differentiable loss and treats the diffusion prior as a regularizer, thereby formulating the problem as an optimization task. Another popular and widely adopted paradigm is Diffusion Posterior Sampling (DPS)\citep{chung2022diffusion}, which is the basic of our method.

As mentioned earlier, the core idea of DPS is to leverage Tweedie’s formula to approximate conditional scores, generalizing linear inverse problem solvers to arbitrary generation tasks. In a series of subsequent works on DPS, LGD-MC~\citep{song2023loss} reduces estimation bias by using Monte Carlo sampling from imperfect Gaussian distributions. However, the strong assumption that $p(x_0|x_t)$ follows a Gaussian distribution prevents any increase in sampling steps from adequately capturing the multi-modal distributions in real world scenarios. Other work such as UGD~\citep{bansal2023universal} and DiffPIR~\citep{zhu2023denoising} perform guidance on clean data $x_0$ followed by projection to intermediate states $x_t$. Recently, MPGD~\citep{he2023manifold} enforces guidance within the tangent space of the clean data manifold using autoencoders, improving sample quality at the cost of relying on pre-trained models and linearity assumptions. DSG~\citep{yang2024guidance}, on the other hand, focuses on the high-density regions near the center of the high-dimensional Gaussian distribution, constraining the guidance step within the intermediate data manifold through optimization, and enabling the use of larger guidance steps. However, it is worth noting that the existing improvements to DPS primarily focus on preventing the intermediate data $x_t$ from deviating from the manifold $\mathcal{M}_t$ at time step $t$, without addressing the issue of the \textit{guidance gradient imprecision} in the plain DPS.

\textbf{Inference time scale diffusion sampling. }
Several existing methods enhance sample quality without modifying model parameters by leveraging additional computational cost during sampling\citep{ma2025scaling,zhang2025test,xu2023restart}. These methods typically fully exploit the capability of pretrained generative models through noise search or reintroducing noise during the generation process to restart sampling. Motivated by the same principle, our method aims to obtain 
more accurate guidance gradients for arbitrary differentiable conditions by increasing appropriate computational budget.

\section{Preliminary}
\subsection{Diffusion Models Fundamentals}
Diffusion models \citep{sohl2015deep} construct a forward process that gradually adds noise to data samples through $T$ time steps, then learn to reverse this process for generation. The framework has evolved through two main perspectives: stochastic differential equations (SDEs) \citep{song2021score} and discrete-time Markov chains \citep{ho2020denoising}.
 
\paragraph{Stochastic differential equation framework.}
The continuous-time perspective formulates diffusion through SDEs:
\begin{equation}
    d\boldsymbol{x} = h(\boldsymbol{x},t)dt + g(t)\boldsymbol{w}.
\end{equation}
where $h(\cdot)$ is the drift coefficient, $g(t)$ controls noise scaling, and $\boldsymbol{w}$ denotes Brownian motion \citep{anderson1982reverse}. The reverse process becomes:
\begin{equation}
    d\boldsymbol{x} = [h(\boldsymbol{x},t) - g(t)^2\nabla_{\boldsymbol{x}}\log p_t(\boldsymbol{x})]dt + g(t)d\bar{\boldsymbol{w}}.
\label{eq:sde}
\end{equation}
where $\nabla_{\boldsymbol{x}}\log p_t(\boldsymbol{x})$ is estimated by a neural network. Various noise schedules yield distinct forward-noising and reverse trajectories but all are subsumed under Equation~\ref{eq:sde}.

\subsection{Training-Free Guided Diffusion}
\label{sec:training-free guidance}
\paragraph{Problem formulation.}
Let $\boldsymbol{x}_0\in\mathcal{X}\subseteq\mathbb{R}^d$ be the \emph{unknown} clean signal we wish to generate or recover, and
$\boldsymbol{y}\in\mathcal{Y}$ an \emph{arbitrary conditioning variable} (e.g.\ a corrupted measurement, categories, or numerical properties).
We assume access to

\begin{itemize}
  \item a pre-trained diffusion prior$p(\boldsymbol{x}_0)$ that models the marginal distribution of clean data;
  \item a differentiable, plug-and-play loss $\mathcal{L}(\boldsymbol{x}_0;\boldsymbol{y})$ defined \emph{on the clean space} $\mathcal{X}$,
        which encodes the conditional likelihood $p(\boldsymbol{y}|\boldsymbol{x}_0)\propto\exp\!\bigl(-\mathcal{L}(\boldsymbol{x}_0;\boldsymbol{y})\bigr)$.
\end{itemize}

Our goal is to sample from the posterior
\[
p(\boldsymbol{x}_0|\boldsymbol{y})\propto p(\boldsymbol{x}_0)\exp\!\bigl(-\mathcal{L}(\boldsymbol{x}_0;\boldsymbol{y})\bigr).
\]
No retraining or time-dependent classifier is required: guidance is performed by back-propagating the loss $\mathcal{L}$ at every diffusion time-step, yielding a fully training-free inference pipeline.

\paragraph{Diffusion posterior sampling.}
Diffusion posterior sampling (DPS)~\citep{chung2022diffusion} casts deterministic inverse problems as posterior sampling via the Bayesian identity:
\begin{equation}
\nabla_{\boldsymbol x_t}\log p(\boldsymbol x_t|\boldsymbol y)
= \nabla_{\boldsymbol x_t}\log p(\boldsymbol x_t)
+ \underbrace{\nabla_{\boldsymbol x_t}\log p(\boldsymbol y|\boldsymbol x_t)}_{\text{intractable}}.
\end{equation}
To sidestep the intractable likelihood score, DPS performs \emph{two successive approximations}.  
\textbf{Approximation-1:} Factories the likelihood and move the expectation \emph{inside} the loss,
\begin{equation}
p(\boldsymbol y|\boldsymbol x_t)
= \int p(\boldsymbol y|\boldsymbol x_0)\,p(\boldsymbol x_0|\boldsymbol x_t)\,\mathrm d\boldsymbol x_0
\;\approx\;
p\!\left(\boldsymbol y|\mathbb E[\boldsymbol x_0|\boldsymbol x_t]\right).
\end{equation}
\textbf{Approximation-2:} Invoke Tweedie’s formula and replace the posterior mean with the \emph{plug-in} estimate produced by the denoising network $\hat{\boldsymbol x}_0$.
Combining the two steps yields the gradient surrogate:
\begin{equation}
\nabla_{\boldsymbol x_t}\log p(\boldsymbol y|\boldsymbol x_t)
\;\approx\;
\nabla_{\boldsymbol x_t}\log p\!\left(\boldsymbol y|\hat{\boldsymbol x}_0(\boldsymbol x_t)\right).
\end{equation}

\section{Method}
In this part, we first analyze the limitations of current mainstream approaches in Section~\ref{sec:absmcs} and propose mitigation strategies. Then, in Section~\ref{sec:bias_analysis}, we conduct an error analysis, providing a theoretical guarantee for the effectiveness of the proposed strategy. Finally, in Section~\ref{sec:framework}, we present the complete pipeline of the method.
\subsection{Additional Backward Step with Monte-Carlo Sampler}
\label{sec:absmcs}
\paragraph{The estimation error problem in DPS.}
As discussed in Section~\ref{sec:training-free guidance}, more generally, for any differentiable conditional function $f(x)$, with the conditioning variable $y$ suppressed for notational simplicity: \textbf{the core objective} is to estimate $\mathbb{E}_{x_0|x_t}[f(x_0)]$, \textit{i.e.}, the conditional expectation of $f$ evaluated at the clean signal $x_0$, given the noisy input $x_t$. The common approach DPS uses the denoising network output $\hat{x}_0(x_t)$ to predict $x_0$ and uses the conditional gradient to guide the diffusion generation process as:
\begin{equation}
    x_{t-1} \leftarrow x'_{t-1} - \omega_t \nabla_{x_t} f(\hat{x}_0(x_t)),
\label{eq:dps}
\end{equation}
where $x'_{t-1}$ is the unconditional update term and $\omega_t$ is the guidance scale. However, this single-point approximation fails to account for the inherent uncertainty in $p(x_0|x_t)$ and introduces significant bias, particularly when $f$ is nonlinear (by Jensen's inequality) and $x_t$ is highly noisy. Crucially, while the target quantity $\mathbb{E}_{x_0|x_t}[f(x_0)]$ depends on the full posterior $p(x_0 |x_t)$, this distribution is generally complex and analytically intractable, precluding direct sampling or exact computation.

\paragraph{Leveraging the trackable structure in diffusion backward process.} 
Fortunately, we notice that the reverse diffusion process defines a Markov chain in which the one-step transition kernel $p(x_{t-1}|x_t)$ can be substituted by an explicitly parameterized Gaussian in practice. This allows us to re-express the desired expectation via the law of total expectation:
\[
\mathbb{E}_{x_0|x_t}[f(x_0)] = \mathbb{E}_{x_{t-1}|x_t} \left[ \mathbb{E}[f(x_0)|x_{t-1}] \right].
\]
Rather than approximating $x_0$ directly from $x_t$, our method propagates uncertainty through an intermediate step, effectively averaging over multiple plausible denoising paths. Based on this insight, we define the \textbf{ABMS} strategy as follows:
\begin{enumerate}
    \item Sample $M$ intermediate states: $x_{t-1}^{(m)} \sim p(x_{t-1}|x_t)$ for $m = 1, \ldots, M$.
    \item For each $x_{t-1}^{(m)}$, obtain a denoised estimate $\hat{x}_0(x_{t-1}^{(m)})$ using the pretrained denoising network.
    \item Evaluate the conditional function: compute $f(\hat{x}_0(x_{t-1}^{(m)}))$ for each sample.
    \item Average the evaluations:
    \begin{equation}
    \hat{f}_{\mathrm{ABMS}}(M, x_t) = \frac{1}{M} \sum_{m=1}^M f(\hat{x}_0(x_{t-1}^{(m)})).
    \label{abms-estimator}
    \end{equation}
\end{enumerate}
Intuitively, by injecting a stochastic intermediate step, ABMS lets the generative network explore multiple plausible denoising trajectories, naturally capturing the multi-modal shape of $p(x_0|x_t)$ instead of being trapped in a single point estimation.
Furthermore, in the experiment section we also demonstrate the method's suitability for higher order samplers involving stochasticity, not just for single step updates.

\subsection{Estimation Error Analysis}
\label{sec:bias_analysis}

We also present a rigorous comparison of the estimation error between the DPS estimator and our ABMS estimator. To establish our theoretical results, we make the following assumptions, which are empirically well-supported and commonly adopted in diffusion modeling:

\renewcommand{\labelenumi}{\textbf{A\arabic{enumi}.}}
\begin{enumerate}
    \item \label{ass:lipschitz}
    The conditional function $f: \mathcal{X} \to \mathbb{R}^d$ is $K$-Lipschitz continuous and $L$-Lipschitz gradient.

    \item \label{ass:denoising_monotonicity}
    The accuracy of the denoiser improves monotonically along the reverse diffusion process. Specifically, for any state $x_t$, the expected reconstruction error satisfies:
    \[
        \mathbb{E}_{x_{t-1} \mid x_t} \left[ \left\| \hat{x}_0(x_{t-1}) - \mathbb{E}_{x_0 \mid x_{t-1}}[x_0] \right\| \right]
        \le \left\| \hat{x}_0(x_t) - \mathbb{E}_{x_0 \mid x_t}[x_0] \right\|.
    \]
    This assumption reflects the intuitive and empirically observed fact that reconstructions from less noisy intermediate states ($x_{t-1}$) are closer to the true posterior mean than those from noisier states ($x_t$).
\end{enumerate}
\renewcommand{\labelenumi}{\arabic{enumi}.}

Then we can have the following proposition:

\begin{propositionbox}{}
\textit{Proposition 1: ABMS attains a lower-bounded expected estimation error than plain DPS}.
\end{propositionbox}

\begin{proof}

\textbf{Error bound for DPS.}
The DPS estimator uses a single reconstruction $\hat{x}_0(x_t)$ to approximate $\mathbb{E}_{x_0 \mid x_t}[f(x_0)]$, yielding:
\[
    \hat{f}_{\mathrm{DPS}} = f(\hat{x}_0(x_t)).
\]
Its estimation error is bounded by decomposing:
\begin{align}
    \|\mathrm{Error}_{\mathrm{DPS}}\|
    &= \left\| f(\hat{x}_0(x_t)) - \mathbb{E}_{x_0 \mid x_t}[f(x_0)] \right\| \nonumber \\
    &\leq \left\| f(\hat{x}_0(x_t)) - f(\mathbb{E}_{x_0 \mid x_t}[x_0]) \right\| + \underbrace{\left\| f(\mathbb{E}_{x_0 \mid x_t}[x_0]) - \mathbb{E}_{x_0 \mid x_t}[f(x_0)] \right\|}_{\text{Jensen gap item:} \delta_f(x_t)} \nonumber \\
    &\leq K \cdot \left\| \hat{x}_0(x_t) - \mathbb{E}_{x_0 \mid x_t}[x_0] \right\| + \delta_f(x_t),
    \label{eq:dps_error}
\end{align}
where $\delta_f(x_t) \geq 0$ known as Jensen gap quantifies the deviation due to the nonlinearity of $f$, and vanishes only if $f$ is affine.

\smallskip

\textbf{Error bound for ABMS.}
Recalling the ABMS estimator defined in~\eqref{abms-estimator}, as $M$ is large enough, it converges to $\mathbb{E}_{x_{t-1} \mid x_t}[f(\hat{x}_0(x_{t-1}))]$. Using the Markov property,
\[
    \mathbb{E}_{x_0 \mid x_t}[f(x_0)] = \mathbb{E}_{x_{t-1} \mid x_t} \left[ \mathbb{E}_{x_0 \mid x_{t-1}}[f(x_0)] \right],
\]
the asymptotic estimation error satisfies:
\begin{align}
\label{eq:abms_error}
\|\mathrm{Error}_{\mathrm{ABMS}}\|
&= \mathbb{E}_{x_{t-1}|x_t}\!\Bigl[\,\bigl\|f\bigl(\hat{x}_{0}(x_{t-1})\bigr)-\mathbb{E}_{x_{0}|x_{t-1}}[f(x_{0})]\bigr\|\,\Bigr]\\ \notag
&\leq K\cdot\mathbb{E}_{x_{t-1}|x_t}\!\Bigl[\,\bigl\|\hat{x}_{0}(x_{t-1})-\mathbb{E}_{x_{0}|x_{t-1}}[x_{0}]\bigr\|\,\Bigr]
+\mathbb{E}_{x_{t-1}|x_t}[\delta_f(x_{t-1})].
\end{align}
\textbf{Comparison of expected error bounds.}
Taking expectation over $x_t\sim p(x_t)$ on both \eqref{eq:dps_error} and \eqref{eq:abms_error} yields:
\begin{align}
\mathbb{E}\|\mathrm{Error_{DPS}}\|
&\le K\,\mathbb{E}\|\hat{x}_{0}(x_t)-\mathbb{E}_{x_0|x_t}[x_0]\|+\mathbb{E}[\delta_f(x_t)], \label{eq:exp_abms}
 \\
\mathbb{E}\|\mathrm{Error_{ABMS}}\|
&\le K\,
\mathbb{E}\bigl[\mathbb{E}_{x_{t-1}|x_t}\|\hat{x}_{0}(x_{t-1})-\mathbb{E}_{x_0|x_{t-1}}[x_0]\|\bigr]
+\mathbb{E}\bigl[\mathbb{E}_{x_{t-1}|x_t}[\delta_f(x_{t-1})]\bigr]. \label{eq:exp_dps}
\end{align}
\textbf{Step 1: reconstruction term.}  
By Assumption~\ref{ass:denoising_monotonicity}, we have:
\[
\mathbb{E}_{x_{t-1}|x_t}\|\hat{x}_{0}(x_{t-1})-\mathbb{E}_{x_0|x_{t-1}}[x_0]\|
\le\|\hat{x}_{0}(x_t)-\mathbb{E}_{x_0|x_t}[x_0]\|,
\]
so the first term in \eqref{eq:exp_abms} which related to reconstruction error is larger than that in \eqref{eq:exp_dps}.

\textbf{Step 2: Jensen-gap term.} According to the $L$-Lipschitz gradient property in Assumption~\ref{ass:lipschitz}, we can get the upper bound of $\mathbb{E}[\delta_f(x_t)]$ and $\mathbb{E}\bigl[\mathbb{E}_{x_{t-1}|x_t}[\delta_f(x_{t-1})]\bigr]$ respectively (the proof is provided in Appendix~\ref{sec:add proof}):
\begin{align}
\mathrm{UB}_t &= \tfrac12 L\,
\mathbb{E}_{x_t}\!\bigl[\operatorname{Tr}\!\bigl(\operatorname{Cov}_{x_0|x_t}[x_0]\bigr)\bigr], \\ \notag
\mathrm{UB}_{t-1}&=
\tfrac12 L\,
\mathbb{E}_{x_t}\!\bigl[\mathbb{E}_{x_{t-1}|x_t}
\operatorname{Tr}\!\bigl(\operatorname{Cov}_{x_0|x_{t-1}}[x_0]\bigr)\bigr].
\end{align}
Law of total covariance gives the identity:  
\[
\operatorname{Cov}_{x_0|x_t}[x_0]
=\mathbb{E}_{x_{t-1}|x_t}\!\bigl[\operatorname{Cov}_{x_0|x_{t-1}}[x_0]\bigr]
+\operatorname{Cov}_{x_{t-1}|x_t}\!\bigl[\mathbb{E}_{x_0|x_{t-1}}[x_0]\bigr].
\]
Taking trace and expectation over $x_t$ and we find that:
\[
\mathrm{UB}_t - \mathrm{UB}_{t-1}
=\tfrac12 L\,
 \mathbb{E}_{x_t}\!\bigl[
 \operatorname{Tr}\!\bigl(
 \operatorname{Cov}_{x_{t-1}|x_t}[\mathbb{E}_{x_0|x_{t-1}}[x_0]]
 \bigr)\bigr]
\ge 0.
\]
Hence $\mathrm{UB}_{t-1}\le \mathrm{UB}_t$, showing that the Jensen gap in ABMS contributes to the expected error upper bound is no larger than that of DPS. Combining the two steps, we have proved the original proposition.
\end{proof}

\subsection{The complete framework of ABMS guidance}
\label{sec:framework}
In the preceding sections we introduced the ABMS estimator to obtain a more accurate guidance direction, denoted as $g$. As regards the scale of the guidance, recall that the posterior $p(x_{t-1}|x_t)$ follows a Gaussian distribution:
$
p(x_{t-1}|x_t)
\;\sim\;
\mathcal{N}\!\left(
\mu_{\theta}(x_t,t),
\;\sigma_t^2 I
\right).
$
Inspired by DSG~\citep{yang2024guidance}, for any $n$-dimensional isotropic Gaussian $x\sim\mathcal{N}(\mu,\sigma^{2}I)$, when $n$ is sufficiently large the distribution is concentrated on a hypersphere of radius $\sqrt{n}\sigma$ centred at $\mu$.
To prevent the guided sample from drifting away from the data manifold, we therefore rescale and constrain the magnitude of the guidance vector to lie in this hypersphere:
$
    g' = \omega_t \cdot\sqrt{n}\sigma_t \cdot \frac{g}{\|g \|}
$, where $\omega_t \in (0,1)$ is the guidance rate. We employ a cosine schedule to ensure that $\omega_t$ increases smoothly throughout the denoising process and the complete procedure is provided in Algorithm~\ref{alg:method}.

\begin{algorithm}[H]
\caption{One Guided Diffusion Step of ABMS}
\label{alg:method}
\small
\begin{algorithmic}[1]
\Require $x_t$, $x_{t-1}^{\mathrm{mean}}$, $\sigma_t$, $w_{\max}$, $f$, $M$, $T$, $n$
\State $\mathrm{radius} \gets \sqrt{n}\cdot\sigma_t$ \Comment{$n$ denotes dimensionality}
\State $\mathcal{F}\gets \emptyset$
\For{$i = 1$ \textbf{to} $M$} \Comment{draw $M$ Monte-Carlo samples}
    \State $\varepsilon_t \sim \mathcal{N}(\mathbf{0},I)$
    \State $x_{t-1} \gets x_{t-1}^{\mathrm{mean}} + \sigma_t\cdot\varepsilon_t$
    \State $\hat{x}_0 \gets \mathrm{pred\_x0}(x_{t-1})$ \Comment{network predicts clean $x_0$}
    \State $\mathcal{F} \gets \mathcal{F} \cup \{f(\hat{x}_0)\}$
\EndFor
\State $\hat{f} \gets \text{average}(\mathcal{F}), $
 $g \gets -\nabla_{x_t}\hat{f}$ \Comment{ABMS guidance direction}
\State $\omega_t = \frac{w_{\max}}{2}\bigl(1 + \cos\bigl(\pi(1 - t/T)\bigr)\bigr)$
\State $g' \gets \omega_t \cdot \mathrm{radius} \cdot \frac{g}{\|g \|}$
\State $x_{t-1}^{\mathrm{new}} \gets x_{t-1}^{\mathrm{mean}} + g' +\sigma_t \cdot \varepsilon_t$
\Ensure $x_{t-1}^{\mathrm{new}}$
\end{algorithmic}
\end{algorithm}

\section{Experiments}
\label{sec:exp}
In this section, we evaluate our methods extensively in various tasks, including a dual-conditional generation task (stylized handwritten trajectory generation), three prevalent image inverse problems (inpainting, super resolution and gaussian deblurring), molecular inverse design and text-style guidance. We primarily compare with the current state-of-the-art method, DSG~\citep{yang2024guidance}, which has been proven to effectively prevent the manifold deviation phenomena. 

A key aspect of our experimental design is the dual-focus evaluation criterion: simultaneously assessing (1) the consistency of the generated samples with the specified conditions, and (2) the resulting impact on other sample attributes. Below we detail the metric pairs used in each experiment: \textbf{content score} vs \textbf{style score} in Section~\ref{sec:write}, \textbf{distance} vs \textbf{FID} in Section~\ref{sec:img}, and \textbf{MAE} vs \textbf{mol stability} in Section~\ref{sec:mol}. For the analysis of sampling time, please refer to Appendix~\ref{sampling time ana}. For more experiment results, please refer to Appendix~\ref{ablation:exp}.

\subsection{Stylized Handwritten Character Generation}
\label{sec:write}
\paragraph{Implementation.} The goal of this task is to generate Chinese characters with specified \textit{categories} and \textit{writing styles}. Following the setup in~\citep{ren2024decoupling}, we use the pretrained diffusion model with dual-conditional generation capabilities. We use the CASIA-OLHWDB (1.0-1.2) dataset~\citep{liu2011casia} as the training set and the ICDAR-2013 competition database~\citep{yin2013icdar} as the test set.  We adopt the DDPM sampler with 1000 sampling steps.

For evaluation, we train two classifiers for character categories and handwriting styles on the test set respectively, following the previous work~\citep{ren2024decoupling,dai2023disentangling}. The classification accuracy of the synthesized samples is called the \textit{\textbf{content score}} and the \textit{\textbf{style score}}. Higher scores indicate that the synthesized samples are closer to the real target data.

\paragraph{Cross-condition interference.} In an ideal scenario (\textit{\textit{i.e.},} on a clean data manifold), the category and writing style are completely decoupled conditions. Therefore, applying a moderate gradient to one condition should not have a negative impact on the other.
To this end, we use \textit{only the gradient from category classifier} to guide the diffusion model, while observing any potential impact on the style score. We evaluate performance under varying guidance scales and the number of sampling time $M$ is set to 3 in our ABMS method.

\begin{figure}[h]
    \centering
    \includegraphics[width=0.95\textwidth]{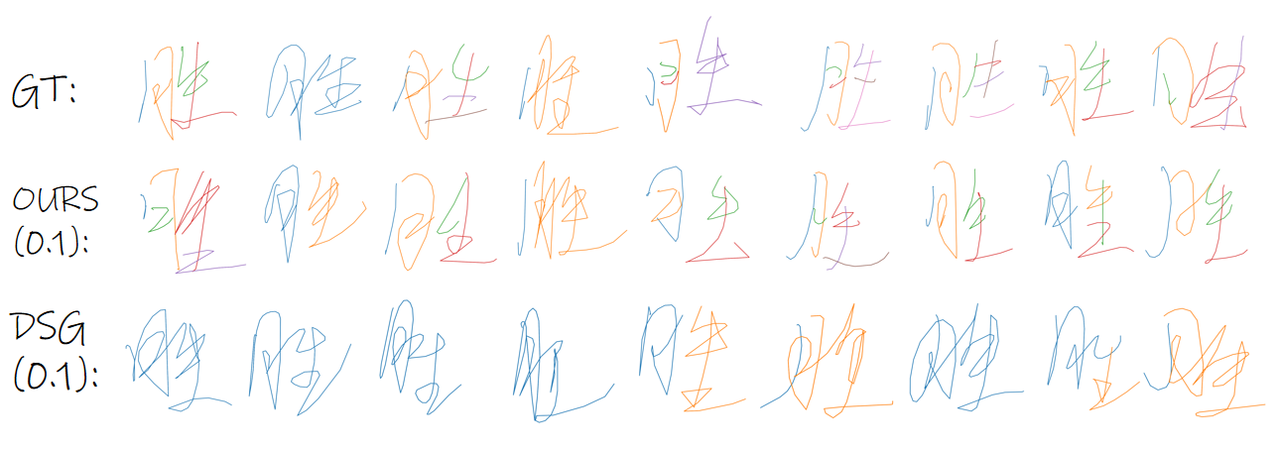}
    \caption{The visualization comparison results, where different colors represent different strokes. The guidance scale is set as 0.1. It can be observed that, even without manifold deviation, the fonts generated by DSG tend to have \textbf{connected strokes}, regardless of the target writing style. On the other hand, our method is able to better preserve the style characteristics.}
    \label{fig:handwriting}
\end{figure}

\textbf{Evaluation results.} As shown in Table~\ref{tab:character gen}, both the two methods can significantly boost the content score. However, even with a smaller guidance scale, DSG method based on plain DPS gradients still exerts a significant influence on style features. In contrast, our method shows substantial improvements in both quantitative and qualitative evaluations. Figure~\ref{fig:handwriting} demonstrates that at a guidance scale of 0.1, our proposed method basically maintains the writing style of the synthesized stroke trajectories, while DSG introduces pronounced stylistic distortions in calligraphic consistency. 

\begin{table}[t]
\caption{Quantitative evaluation of conditional character generation under different guidance scales.}
\label{tab:character gen}
\adjustbox{width=\textwidth}{ %
\begin{tabular}{@{}c|c|ccc|ccc@{}}
\toprule
Method-Scale        & No guidance & DSG(0.01) & DSG(0.1) & DSG(0.5) & Ours(0.01) & Ours(0.1) & Ours(0.5) \\ \midrule
Content($\uparrow$) & 0.827       & 0.927    & 0.998   & 0.999   & 0.981     & 0.999    & 0.999    \\
Style($\uparrow$)   & 0.899       & 0.543    & 0.534   & 0.166   & 0.888     & 0.878    & 0.756    \\ \bottomrule
\end{tabular}
}
\end{table}

\subsection{Image Inverse Problems}
\label{sec:img}

\paragraph{Implementation.}
In this part, we investigate three prevalent linear inverse problems of the form \(y = Ax + \epsilon\), which including \textbf{image inpainting}, \textbf{super-resolution}, and \textbf{gaussian deblurring}.
We evaluate our proposed method on two datasets: 1K images from FFHQ 256×256~\citep{karras2019style} and ImageNet 256×256 validation dataset~\citep{deng2009imagenet}, using pre-trained diffusion models sourced from~\citep{chung2022diffusion,nichol2021diffusion}. For comparison, we select several baseline methods: DPS~\citep{chung2022diffusion}, LGD~\citep{song2023loss} and DSG~\citep{yang2024guidance}. We adopt the experimental setup from DSG to generate noisy measurements. The loss function guiding the reconstruction process is defined as follows:
\begin{equation}
    \mathcal{L}(x_0, y) = \left\| A \hat{x}_0 (x_t) - y \right\|_2^2,
\label{distance}
\end{equation}
where \(y\) represents the noisy measurement, $A$ represents the deterioration model and \(x_0\) denotes the image we aim to reconstruct. For these tasks, we employ DDIM~\citealp{song2021ddim} sampler with 100 sampling steps.

For evaluation, as shown in Equation~\ref{distance}, this loss function essentially represents the pixel-wise difference between the generated image and the real ground-truth image after both undergo the same degradation process. We refer to this as the `\textbf{Distance}' metric, which is used to evaluate the degree to which the guidance method adheres to the conditions. Additionally, we evaluated image quality metrics including PSNR, SSIM, LPIPS, and FID.

\textbf{Evaluation results.} The quantitative evaluation results for the tasks mentioned above are presented in Table~\ref{tab:imgquality}. Notably, our method demonstrates consistent performance improvements over baselines across a range of evaluation metrics and task configurations. These results provide empirical support for the feasibility of our proposed modification.

More importantly, following the principle of \textit{\textbf{dual-focus evaluation}}, we measure the performance curves of the Distance vs FID metrics under different guidance scales, as illustrated in Figure~\ref{fig:img-eval}. Our method achieves a lower Distance while maintaining higher image quality. Due to space limitations, the results for the Gaussian Deblurring task are provided in the Appendix~\ref{ablation:exp} for reference.

For the case $M=1$, the resulting curve is visually relatively similar to the initial DSG and is therefore omitted for clarity. It is noteworthy that as the number of sampling steps (see Equation~\ref{abms-estimator}) increases, the method exhibits relatively better performance, which aligns with the algorithm's intuition. Moreover, we observe that the performance gain becomes evident once $M$ reaches 3, and the marginal benefit gradually saturates as $M$ is further increased.

\begin{figure}[hb]
    \centering
    \includegraphics[width=0.95\textwidth]{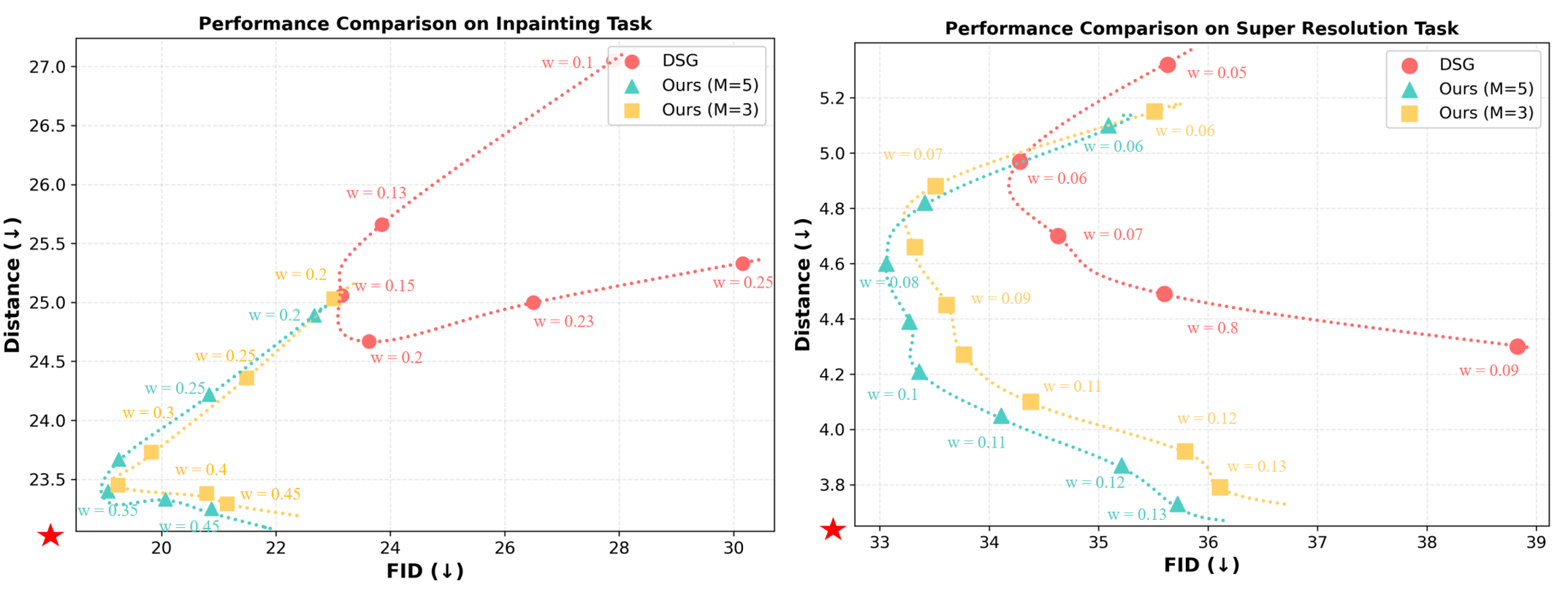}
    \caption{Performance curves of Distance Metric vs FID Metric. We select different guidance scales for each method to obtain performance trend curves. It can be clearly observed that our method achieves a better guidance effect and exhibits greater robustness to the selection of guidance scale.}
    \label{fig:img-eval}
\end{figure}

\begin{table}[!h]
\centering
\caption{Quantitative image-quality evaluation on Imagnet under the linear inverse problem.}
\label{tab:imgquality}
\small\setlength{\tabcolsep}{3.pt} 
\begin{tabular}{@{}c|cccc|cccc|cccc@{}} %
\toprule
\multirow{2}{*}{Methods} & \multicolumn{4}{c|}{\makecell{Inpainting}}  %
                         & \multicolumn{4}{c|}{\makecell{Super resolution}} 
                         & \multicolumn{4}{c}{\makecell{Gaussian deblurring}} \\
\cmidrule(lr){2-5} \cmidrule(lr){6-9} \cmidrule(lr){10-13} %
                         & PSNR↑ & SSIM↑ & LPIPS↓ & FID↓ 
                         & PSNR↑ & SSIM↑ & LPIPS↓ & FID↓ 
                         & PSNR↑ & SSIM↑ & LPIPS↓ & FID↓ \\ 
\midrule
DPS                      & 27.56 & 0.825 & 0.251 & 30.57 
                         & 22.07 & 0.633 & 0.317 & 41.36 
                         & 18.78 & 0.387 & 0.523 & 52.13 \\
LGD                      & 27.78 & 0.824 & 0.242 & 28.65 
                         & 22.23 & 0.631 & 0.305 & 39.85 
                         & 19.52 & 0.452 & 0.501 & 50.42 \\
DSG                      & 28.67 & 0.877 & 0.176 & 23.63 
                         & 23.74 & 0.696 & 0.291 & 34.28 
                         & 22.64 & 0.644 & 0.335 & 45.27 \\
Ours                    & \textbf{29.23} & \textbf{0.889} & \textbf{0.154} & \textbf{19.25} 
                         & \textbf{23.80} & \textbf{0.697} & \textbf{0.290} & \textbf{33.06} 
                         & \textbf{22.65} & \textbf{0.645} & \textbf{0.321} & \textbf{41.65} \\ 
\bottomrule
\end{tabular}
\end{table}

\subsection{Molecular Inverse Design}
\label{sec:mol}

\textbf{Implementation.}
Inverse molecular design aims to generate 3D molecular structures that exhibit desired quantum properties. Following the settings in EDM~\citep{hoogeboom2022equivariant} and EEGSDE~\citep{bao2022equivariant}, we adopt the QM9~\citep{ramakrishnan2014quantum} dataset, which contains $\sim$130k organic molecules with up to nine heavy atoms (C, N, O, F) and their corresponding quantum properties. Six properties are selected: polarizability ($\alpha$), dipole moment ($\mu$), HOMO/LUMO energies ($\varepsilon_{\text{HOMO}}$, $\varepsilon_{\text{LUMO}}$), HOMO-LUMO gap ($\Delta\varepsilon$), and heat capacity at constant volume ($C_v$). All experiments are conducted with the pre-trained EEGSDE model.
More specifically, for each property, two separate predictors are needed: (i) a time-dependent, noise-robust predictor $g_\phi(x_t, t)$ used by EEGSDE to guide the diffusion process, and (ii) an evaluation predictor $\varphi_p(x_0)$, which has been trained on clean data only used to assess the generated molecules. In Table~\ref{tab:mol}, ``L-bound'' represents the MAE error of the evaluation predictor on the test data.

The baseline conditional EDM (cEDM) receives the desired property as an additional input during training to learn a conditional generative model without guidance. Based on EDM, EEGSDE incorporates an equivariant energy function $E(x_t,c,t)=s\big(g_\phi(x_t,t)-c\big)^2$ into the reverse-time SDE, where $s$ is a tunable scale and $c$ is the desired quantum property. For EEGSDE, since $g_\phi$ must be robust to noise, it is conditioned on both the perturbed coordinates $x_t$ and the time step $t$. 
In contrast, our training-free approach only requires a predictor defined on clean data ($t=0$). However, \textit{for a fair comparison, we reuse the EEGSDE-trained $g_\phi$}, but freeze $t=0$ when providing guidance.

\begin{table}[hb]
\centering
\caption{Quantitative Evaluation of Molecular Property Prediction. MAE represents the numerical discrepancy between the specific properties of generated molecules and the target conditions, while MS denotes the molecular stability index.}
\label{tab:mol}
\small\setlength{\tabcolsep}{5pt} %
\begin{adjustbox}{max width=\textwidth}
\begin{tabular}{@{}ccccccccc@{}} %
\toprule
\makecell{Method} & \makecell{MAE\\(↓)} & \makecell{MS\\(↑)} & 
\makecell{Method} & \makecell{MAE\\(↓)} & \makecell{MS\\(↑)} & 
\makecell{Method} & \makecell{MAE\\(↓)} & \makecell{MS\\(↑)} \\ 
\midrule
\multicolumn{3}{c}{\textbf{$C_v$}} & \multicolumn{3}{c}{\textbf{$\mu$}} & \multicolumn{3}{c}{\textbf{$\alpha$}} \\ 
\cmidrule(lr){1-3} \cmidrule(lr){4-6} \cmidrule(lr){7-9}
CEDM    & 1.0650 & --    & CEDM    & 1.1230 & --    & CEDM    & 2.7804 & --    \\
EEGSDE  & 0.9187 & 0.7836  & EEGSDE  & 0.7518 & 0.8036  & EEGSDE  & 2.4133 & 0.7954 \\
DSG     & 0.8447 & 0.7654  & DSG     & 0.7811 & 0.8001  & DSG     & 2.1919 & 0.7863  \\
Ours($M=3$)    & \textbf{0.8348} & 0.7718  & Ours($M=3$)    & \textbf{0.7274} & 0.8059  & Ours($M=3$)    & \textbf{2.1001} & 0.7745  \\
L-bound & 0.0400 & --    & L-bound & 0.0430 & --    & L-bound & 0.0900 & --    \\ 
\midrule
\multicolumn{3}{c}{\textbf{$\triangle\epsilon$}} & \multicolumn{3}{c}{\textbf{$\epsilon_{\text{HOMO}}$}} & \multicolumn{3}{c}{\textbf{$\epsilon_{\text{LUMO}}$}} \\ 
\cmidrule(lr){1-3} \cmidrule(lr){4-6} \cmidrule(lr){7-9}
CEDM    & 0.6710 & --    & CEDM    & 0.3714 & --    & CEDM    & 0.6015 & --    \\
EEGSDE  & 0.4854 & 0.7572  & EEGSDE  & 0.2997 & 0.7890  & EEGSDE  & 0.4450 & 0.8000  \\
DSG     & 0.4558 & 0.7890  & DSG     & 0.2673 & 0.7854  & DSG     & 0.3969 & 0.8081  \\
Ours($M=3$)    & \textbf{0.4182} & 0.7909  & Ours($M=3$)    & \textbf{0.2449} & 0.7872  & Ours($M=3$)    & \textbf{0.3778} & 0.8181  \\
L-bound & 0.0650 & --    & L-bound & 0.0390 & --    & L-bound & 0.0360 & --    \\ 
\bottomrule
\end{tabular}
\end{adjustbox}
\end{table}

For evaluation, we still adhere to the principle of \textit{\textbf{dual-focus evaluation}}, assessing both the metric of Molecular Property Deviation from Conditions (MAE) and the Molecular Stability (MS) metric. More specifically, we observe that for each method, as the guidance scale increases, the MAE metric gradually decreases, but this may be accompanied by a decline in the MS metric. Therefore, we first reproduced the parameters of EEGSDE, and then adjusted the guidance scales of the DSG and ABMS methods until their MS metrics either outperformed EEGSDE's or differed by no more than 2\% before comparing their MAE metrics.

\textbf{Evaluation Results.}
As shown in Table~\ref{tab:mol}, under the condition of comparable molecular stability (MS), our method achieves superior MAE metrics across six distinct conditional molecular inverse design tasks. This further demonstrates that our method can provide more accurate conditional guidance compared to existing methods.
It can be observed that tasks conditioned on precise numerical values exhibit significantly degraded performance when guidance is absent, corroborating the necessity of explicit guidance mechanisms. Moreover, we emphasize that our framework only requires a property predictor defined on clean data; for fair comparison, we reused the noise-robust predictor trained by EEGSDE, which may partially limit the achievable performance.

\subsection{Scaling up text style guidance}
To thoroughly validate the generalizability of our method, We conduct experiments on diffusion model of \textbf{larger scale} and \textbf{distinct training strategy}. Specifically, we perform text-style guidance task and adopt Stable Diffusion 3.5~\citep{rombach2022high,esser2024scaling} as the generative diffusion prior, which is a **\textit{flow matching}**~\citealp{lipman2022flow,rectified} based model and receives a piece of text description as input. We adopt the method proposed in~\citep{liu2025flow} to enable SDE sampling. We apply the style loss for guidance, which is defined as:
\begin{equation}
    \mathcal{L}(\hat{x}_0(x_t), x_{in}) = \|E(\hat{x}_0(x_t)) - E(x_{in})\|_2^F,
\end{equation}
where $x_{in}$ is the style reference image, $E$ represents the Gram matrix of the third feature map extracted from the CLIP image encoder~\citep{radford2021learning}, and \( \| \cdot \|_2^F \) denotes the Frobenius norm. The size of the generated images is 512×512.
Figures~\ref{fig:text-style512} presents the visual results of our method with $M=3$ compared to the baseline method. As can be seen, our method also achieves satisfying performance on flow matching based models and produces clearer and much higher image quality while ensuring adherence to the conditional guidance. This is because more accurate guidance gradients enable the generation process to satisfy the conditions as much as possible while not undermining the prior diffusion capabilities.

\begin{figure}[!h]
    \centering
    \includegraphics[width=1.05\textwidth]{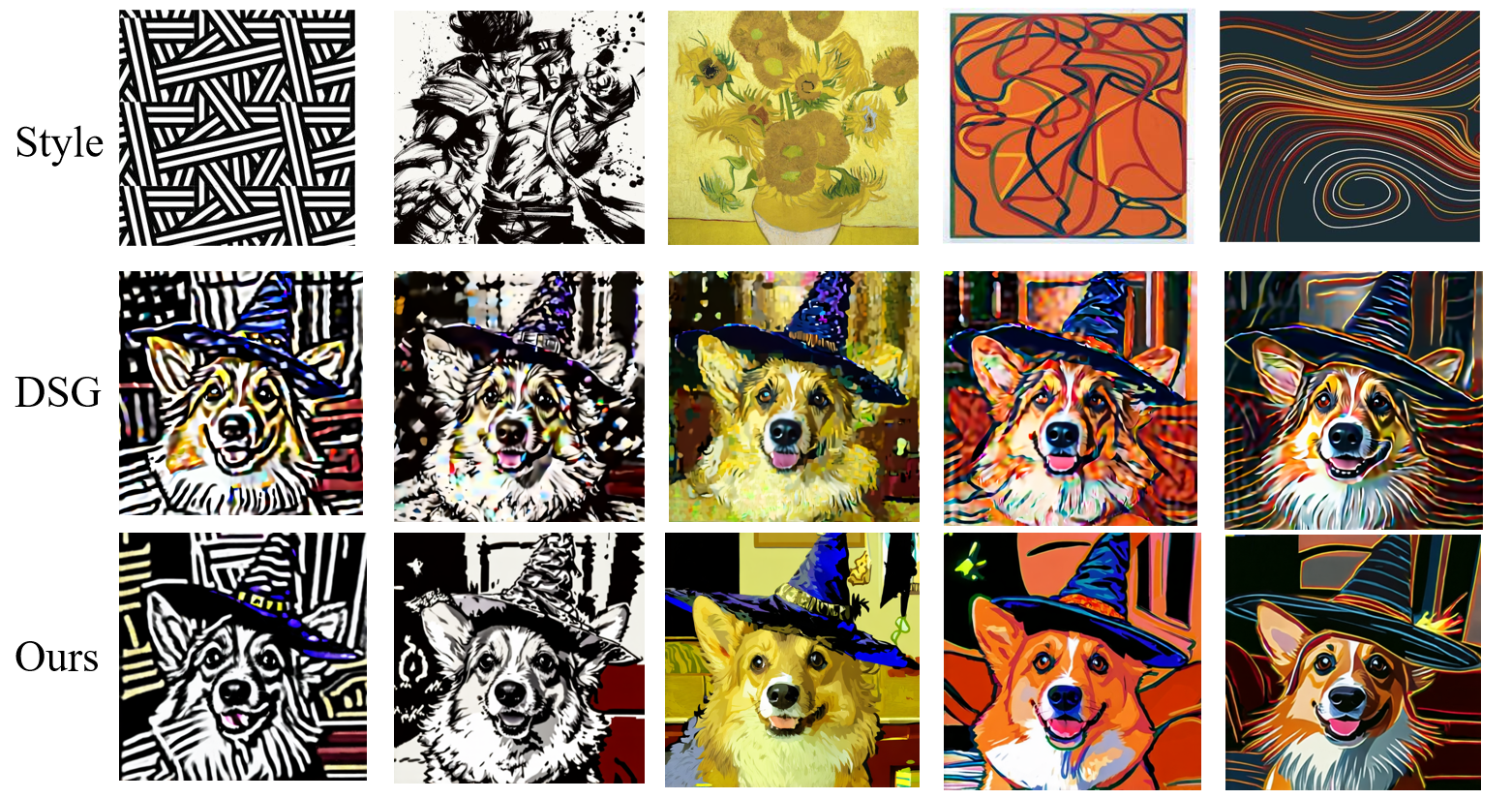}
    \caption{Qualitative result of Text-style guidance, the text input is ``A corgi wearing a wizard hat''. Our method generates much clearer and higher quality results than the baseline method across all the target style.}
    \label{fig:text-style512}
\end{figure}

\section{Conclusion \& Discussion}\label{sec:Discussion}
In this paper, we propose a simple plug-and-play strategy ABMS to improve the existing DPS-based guidance methods and a dual-focus evaluation paradigm to offer a more discerning assessment of guidance methodologies. To mitigate the 
giant estimation error when calculating the guidance gradient, we demonstrate that the strategy of performing an additional denoising step with Monte-Carlo sampling can reduce the variance interference, effectively improves the performance.

\paragraph{Limitation and future work.} Although the proposed method has demonstrated its efficacy in integrating and enhancing existing approaches, constrained by computational resources, we have yet to systematically investigate whether further increasing the number of reverse diffusion steps or the sampling budget yields additional gains across diverse scenarios. Additionally, how the proposed methodology can be adapted to paradigms that enable very few-step generation is also an open question. These will be exciting directions for future work.

\section{Ethics Statement.} 

This paper focuses on the research of general artificial intelligence technology, specifically the generative diffusion models, and does not involve ethic issues.

\section{Reproducibility Statement.}  For a fair comparison, all the experiments mentioned strictly follow the framework, data, and pre-trained diffusion models from existing articles, with corresponding references provided in the main text.

\bibliography{iclr2026_conference}

\begin{thebibliography}{47}
\providecommand{\natexlab}[1]{#1}
\providecommand{\url}[1]{\texttt{#1}}
\expandafter\ifx\csname urlstyle\endcsname\relax
  \providecommand{\doi}[1]{doi: #1}\else
  \providecommand{\doi}{doi: \begingroup \urlstyle{rm}\Url}\fi

\bibitem[Anderson(1982)]{anderson1982reverse}
Brian~DO Anderson.
\newblock Reverse-time diffusion equation models.
\newblock \emph{Stochastic Processes and their Applications}, 12\penalty0 (3):\penalty0 313--326, 1982.

\bibitem[Bansal et~al.(2023)Bansal, Chu, Schwarzschild, Sengupta, Goldblum, Geiping, and Goldstein]{bansal2023universal}
Arpit Bansal, Hong-Min Chu, Avi Schwarzschild, Soumyadip Sengupta, Micah Goldblum, Jonas Geiping, and Tom Goldstein.
\newblock Universal guidance for diffusion models.
\newblock \emph{CVPR}, 2023.

\bibitem[Bao et~al.(2022)Bao, Zhao, Hao, Li, Li, and Zhu]{bao2022equivariant}
Fan Bao, Min Zhao, Zhongkai Hao, Peiyao Li, Chongxuan Li, and Jun Zhu.
\newblock Equivariant energy-guided sde for inverse molecular design.
\newblock \emph{NeurIPS}, 2022.

\bibitem[Cardoso et~al.(2024)Cardoso, Idrissi, Corff, and Moulines]{cardoso2023monte}
Gabriel Cardoso, Yazid Janati~El Idrissi, Sylvain~Le Corff, and Eric Moulines.
\newblock Monte carlo guided diffusion for bayesian linear inverse problems.
\newblock \emph{ICLR}, 2024.

\bibitem[Chung et~al.(2022)Chung, Sim, Ryu, and Ye]{chung2022improving}
Hyungjin Chung, Byeongsu Sim, Dohoon Ryu, and Jong~Chul Ye.
\newblock Improving diffusion models for inverse problems using manifold constraints.
\newblock \emph{NeurIPS}, 2022.

\bibitem[Chung et~al.(2023)Chung, Kim, Mccann, Klasky, and Ye]{chung2022diffusion}
Hyungjin Chung, Jeongsol Kim, Michael~T Mccann, Marc~L Klasky, and Jong~Chul Ye.
\newblock Diffusion posterior sampling for general noisy inverse problems.
\newblock \emph{ICLR}, 2023.

\bibitem[Dai et~al.(2023)Dai, Zhang, Wang, Du, Yu, Liu, and Huang]{dai2023disentangling}
Gang Dai, Yifan Zhang, Qingfeng Wang, Qing Du, Zhuliang Yu, Zhuoman Liu, and Shuangping Huang.
\newblock Disentangling writer and character styles for handwriting generation.
\newblock \emph{CVPR}, 2023.

\bibitem[Deng et~al.(2009)Deng, Dong, Socher, Li, Li, and Fei-Fei]{deng2009imagenet}
Jia Deng, Wei Dong, Richard Socher, Li-Jia Li, Kai Li, and Li~Fei-Fei.
\newblock Imagenet: A large-scale hierarchical image database.
\newblock \emph{CVPR}, 2009.

\bibitem[Deng et~al.(2019)Deng, Guo, Xue, and Zafeiriou]{deng2019arcface}
Jiankang Deng, Jia Guo, Niannan Xue, and Stefanos Zafeiriou.
\newblock Arcface: Additive angular margin loss for deep face recognition.
\newblock \emph{CVPR}, 2019.

\bibitem[Dhariwal \& Nichol(2021)Dhariwal and Nichol]{nichol2021diffusion}
Prafulla Dhariwal and Alex Nichol.
\newblock Diffusion models beat gans on image synthesis.
\newblock \emph{NeurIPS}, 2021.

\bibitem[Dou \& Song(2024)Dou and Song]{dou2024diffusion}
Zehao Dou and Yang Song.
\newblock Diffusion posterior sampling for linear inverse problem solving: A filtering perspective.
\newblock \emph{ICLR}, 2024.

\bibitem[Esser et~al.(2024)Esser, Kulal, Blattmann, Entezari, M{\"u}ller, Saini, Levi, Lorenz, Sauer, Boesel, et~al.]{esser2024scaling}
Patrick Esser, Sumith Kulal, Andreas Blattmann, Rahim Entezari, Jonas M{\"u}ller, Harry Saini, Yam Levi, Dominik Lorenz, Axel Sauer, Frederic Boesel, et~al.
\newblock Scaling rectified flow transformers for high-resolution image synthesis.
\newblock In \emph{ICML}, 2024.

\bibitem[Feng et~al.(2023)Feng, Smith, Rubinstein, Chang, Bouman, and Freeman]{feng2023score}
Berthy~T Feng, Jamie Smith, Michael Rubinstein, Huiwen Chang, Katherine~L Bouman, and William~T Freeman.
\newblock Score-based diffusion models as principled priors for inverse imaging.
\newblock \emph{CVPR}, 2023.

\bibitem[He et~al.(2024)He, Murata, Lai, Takida, Uesaka, Kim, Liao, Mitsufuji, Kolter, Salakhutdinov, et~al.]{he2023manifold}
Yutong He, Naoki Murata, Chieh-Hsin Lai, Yuhta Takida, Toshimitsu Uesaka, Dongjun Kim, Wei-Hsiang Liao, Yuki Mitsufuji, J~Zico Kolter, Ruslan Salakhutdinov, et~al.
\newblock Manifold preserving guided diffusion.
\newblock \emph{ICLR}, 2024.

\bibitem[Ho \& Salimans(2022)Ho and Salimans]{ho2022classifier}
Jonathan Ho and Tim Salimans.
\newblock Classifier-free diffusion guidance.
\newblock \emph{arXiv preprint arXiv:2207.12598}, 2022.

\bibitem[Ho et~al.(2020)Ho, Jain, and Abbeel]{ho2020denoising}
Jonathan Ho, Ajay Jain, and Pieter Abbeel.
\newblock Denoising diffusion probabilistic models.
\newblock \emph{NeurIPS}, 2020.

\bibitem[Hoogeboom et~al.(2022)Hoogeboom, Satorras, Vignac, and Welling]{hoogeboom2022equivariant}
Emiel Hoogeboom, V{\i}ctor~Garcia Satorras, Cl{\'e}ment Vignac, and Max Welling.
\newblock Equivariant diffusion for molecule generation in 3d.
\newblock In \emph{ICML}, 2022.

\bibitem[Janati et~al.(2024)Janati, Moufad, Durmus, Moulines, and Olsson]{janati2024divide}
Yazid Janati, Badr Moufad, Alain Durmus, Eric Moulines, and Jimmy Olsson.
\newblock Divide-and-conquer posterior sampling for denoising diffusion priors.
\newblock \emph{NeurIPS}, 2024.

\bibitem[Karras et~al.(2019)Karras, Laine, and Aila]{karras2019style}
Tero Karras, Samuli Laine, and Timo Aila.
\newblock A style-based generator architecture for generative adversarial networks.
\newblock \emph{CVPR}, 2019.

\bibitem[Kawar et~al.(2022)Kawar, Elad, Ermon, and Song]{kawar2022denoising}
Bahjat Kawar, Michael Elad, Stefano Ermon, and Jiaming Song.
\newblock Denoising diffusion restoration models.
\newblock \emph{NeurIPS}, 2022.

\bibitem[Lipman et~al.(2023)Lipman, Chen, Ben-Hamu, Nickel, and Le]{lipman2022flow}
Yaron Lipman, Ricky~TQ Chen, Heli Ben-Hamu, Maximilian Nickel, and Matt Le.
\newblock Flow matching for generative modeling.
\newblock \emph{ICLR}, 2023.

\bibitem[Liu et~al.(2011)Liu, Yin, Wang, and Wang]{liu2011casia}
Cheng-Lin Liu, Fei Yin, Da-Han Wang, and Qiu-Feng Wang.
\newblock Casia online and offline chinese handwriting databases.
\newblock \emph{Icdar}, 2011.

\bibitem[Liu et~al.(2025)Liu, Liu, Liang, Li, Liu, Wang, Wan, Zhang, and Ouyang]{liu2025flow}
Jie Liu, Gongye Liu, Jiajun Liang, Yangguang Li, Jiaheng Liu, Xintao Wang, Pengfei Wan, Di~Zhang, and Wanli Ouyang.
\newblock Flow-grpo: Training flow matching models via online rl.
\newblock \emph{NeurIPS}, 2025.

\bibitem[Liu et~al.(2023)Liu, Gong, and Liu]{rectified}
Xingchao Liu, Chengyue Gong, and Qiang Liu.
\newblock Flow straight and fast: Learning to generate and transfer data with rectified flow.
\newblock \emph{ICLR}, 2023.

\bibitem[Lu et~al.(2022)Lu, Zhou, Bao, Chen, Li, and Zhu]{dpm}
Cheng Lu, Yuhao Zhou, Fan Bao, Jianfei Chen, Chongxuan Li, and Jun Zhu.
\newblock Dpm-solver: A fast ode solver for diffusion probabilistic model sampling in around 10 steps.
\newblock \emph{NeurIPS}, 2022.

\bibitem[Lu et~al.(2025)Lu, Zhou, Bao, Chen, Li, and Zhu]{dpm+}
Cheng Lu, Yuhao Zhou, Fan Bao, Jianfei Chen, Chongxuan Li, and Jun Zhu.
\newblock Dpm-solver++: Fast solver for guided sampling of diffusion probabilistic models.
\newblock \emph{Machine Intelligence Research}, pp.\  1--22, 2025.

\bibitem[Lugmayr et~al.(2022)Lugmayr, Danelljan, Romero, Yu, Timofte, and Van~Gool]{lugmayr2022repaint}
Andreas Lugmayr, Martin Danelljan, Andres Romero, Fisher Yu, Radu Timofte, and Luc Van~Gool.
\newblock Repaint: Inpainting using denoising diffusion probabilistic models.
\newblock \emph{CVPR}, 2022.

\bibitem[Ma et~al.(2025)Ma, Tong, Jia, Hu, Su, Zhang, Yang, Li, Jaakkola, Jia, et~al.]{ma2025scaling}
Nanye Ma, Shangyuan Tong, Haolin Jia, Hexiang Hu, Yu-Chuan Su, Mingda Zhang, Xuan Yang, Yandong Li, Tommi Jaakkola, Xuhui Jia, et~al.
\newblock Scaling inference time compute for diffusion models.
\newblock In \emph{CVPR}, 2025.

\bibitem[Mardani et~al.(2023)Mardani, Song, Kautz, and Vahdat]{red-diff}
Morteza Mardani, Jiaming Song, Jan Kautz, and Arash Vahdat.
\newblock A variational perspective on solving inverse problems with diffusion models.
\newblock \emph{arXiv preprint arXiv:2305.04391}, 2023.

\bibitem[Radford et~al.(2021)Radford, Kim, Hallacy, Ramesh, Goh, Agarwal, Sastry, Askell, Mishkin, Clark, et~al.]{radford2021learning}
Alec Radford, Jong~Wook Kim, Chris Hallacy, Aditya Ramesh, Gabriel Goh, Sandhini Agarwal, Girish Sastry, Amanda Askell, Pamela Mishkin, Jack Clark, et~al.
\newblock Learning transferable visual models from natural language supervision.
\newblock \emph{ICML}, 2021.

\bibitem[Ramakrishnan et~al.(2014)Ramakrishnan, Dral, Rupp, and Von~Lilienfeld]{ramakrishnan2014quantum}
Raghunathan Ramakrishnan, Pavlo~O Dral, Matthias Rupp, and O~Anatole Von~Lilienfeld.
\newblock Quantum chemistry structures and properties of 134 kilo molecules.
\newblock \emph{Scientific data}, 1\penalty0 (1):\penalty0 1--7, 2014.

\bibitem[Ren et~al.(2025)Ren, Zhang, and Chen]{ren2024decoupling}
Min-Si Ren, Yan-Ming Zhang, and Yi~Chen.
\newblock Decoupling layout from glyph in online chinese handwriting generation.
\newblock \emph{ICLR}, 2025.

\bibitem[Rombach et~al.(2022)Rombach, Blattmann, Lorenz, Esser, and Ommer]{rombach2022high}
Robin Rombach, Andreas Blattmann, Dominik Lorenz, Patrick Esser, and Bj{\"o}rn Ommer.
\newblock High-resolution image synthesis with latent diffusion models.
\newblock \emph{CVPR}, 2022.

\bibitem[Sohl-Dickstein et~al.(2015)Sohl-Dickstein, Weiss, Maheswaranathan, and Ganguli]{sohl2015deep}
Jascha Sohl-Dickstein, Eric~A Weiss, Niru Maheswaranathan, and Surya Ganguli.
\newblock Deep unsupervised learning using nonequilibrium thermodynamics.
\newblock \emph{ICML}, 2015.

\bibitem[Song et~al.(2021{\natexlab{a}})Song, Meng, and Ermon]{song2021ddim}
Jiaming Song, Chenlin Meng, and Stefano Ermon.
\newblock Denoising diffusion implicit models.
\newblock \emph{ICLR}, 2021{\natexlab{a}}.

\bibitem[Song et~al.(2023)Song, Zhang, Yin, Mardani, Liu, Kautz, Chen, and Vahdat]{song2023loss}
Jiaming Song, Qinsheng Zhang, Hongxu Yin, Morteza Mardani, Ming-Yu Liu, Jan Kautz, Yongxin Chen, and Arash Vahdat.
\newblock Loss-guided diffusion models for plug-and-play controllable generation.
\newblock \emph{ICML}, 2023.

\bibitem[Song \& Ermon(2019)Song and Ermon]{song2019generative}
Yang Song and Stefano Ermon.
\newblock Generative modeling by estimating gradients of the data distribution.
\newblock \emph{NeurIPS}, 2019.

\bibitem[Song et~al.(2021{\natexlab{b}})Song, Sohl-Dickstein, Kingma, Kumar, Ermon, and Poole]{song2021score}
Yang Song, Jascha Sohl-Dickstein, Diederik~P Kingma, Abhishek Kumar, Stefano Ermon, and Ben Poole.
\newblock Score-based generative modeling through stochastic differential equations.
\newblock \emph{ICLR}, 2021{\natexlab{b}}.

\bibitem[Wallace et~al.(2023)Wallace, Gokul, Ermon, and Naik]{wallace2023end}
Bram Wallace, Akash Gokul, Stefano Ermon, and Nikhil Naik.
\newblock End-to-end diffusion latent optimization improves classifier guidance.
\newblock \emph{ICCV}, 2023.

\bibitem[Wang et~al.(2023)Wang, Yu, and Zhang]{wang2022zero}
Yinhuai Wang, Jiwen Yu, and Jian Zhang.
\newblock Zero-shot image restoration using denoising diffusion null-space model.
\newblock \emph{ICLR}, 2023.

\bibitem[Xu et~al.(2023)Xu, Deng, Cheng, Tian, Liu, and Jaakkola]{xu2023restart}
Yilun Xu, Mingyang Deng, Xiang Cheng, Yonglong Tian, Ziming Liu, and Tommi Jaakkola.
\newblock Restart sampling for improving generative processes.
\newblock \emph{NeurIPS}, 36:\penalty0 76806--76838, 2023.

\bibitem[Yang et~al.(2024)Yang, Ding, Cai, Yu, Wang, and Shi]{yang2024guidance}
Lingxiao Yang, Shutong Ding, Yifan Cai, Jingyi Yu, Jingya Wang, and Ye~Shi.
\newblock Guidance with spherical gaussian constraint for conditional diffusion.
\newblock \emph{ICML}, 2024.

\bibitem[Yin et~al.(2013)Yin, Wang, Zhang, and Liu]{yin2013icdar}
Fei Yin, Qiu-Feng Wang, Xu-Yao Zhang, and Cheng-Lin Liu.
\newblock Icdar 2013 chinese handwriting recognition competition.
\newblock \emph{Icdar}, 2013.

\bibitem[Yu et~al.(2023)Yu, Wang, Zhao, Ghanem, and Zhang]{yu2023freedom}
Jiwen Yu, Yinhuai Wang, Chen Zhao, Bernard Ghanem, and Jian Zhang.
\newblock Freedom: Training-free energy-guided conditional diffusion model.
\newblock \emph{ICCV}, 2023.

\bibitem[Zhang et~al.(2023)Zhang, Rao, and Agrawala]{zhang2023adding}
Lvmin Zhang, Anyi Rao, and Maneesh Agrawala.
\newblock Adding conditional control to text-to-image diffusion models.
\newblock \emph{ICCV}, 2023.

\bibitem[Zhang et~al.(2025)Zhang, Pan, Feng, and T-scend]{zhang2025test}
Tao Zhang, Jia-Shu Pan, Ruiqi Feng, and Tailin~Wu T-scend.
\newblock Test-time scalable mcts-enhanced diffusion model.
\newblock \emph{arXiv preprint arXiv:2502.01989}, 2025.

\bibitem[Zhu et~al.(2023)Zhu, Zhang, Liang, Cao, Wen, Timofte, and Van~Gool]{zhu2023denoising}
Yuanzhi Zhu, Kai Zhang, Jingyun Liang, Jiezhang Cao, Bihan Wen, Radu Timofte, and Luc Van~Gool.
\newblock Denoising diffusion models for plug-and-play image restoration.
\newblock \emph{CVPR}, 2023.

\end{thebibliography}
\bibliographystyle{iclr2026_conference}

\newpage
\appendix
\section{Appendix}
\subsection{Additional Proof}
\label{sec:add proof}
\begin{lemma}[Jensen Gap Upper Bound]
\label{lem:jensen_gap_ub}
Let $f: \mathbb{R}^d \to \mathbb{R}$ be a continuously differentiable 
function with $L$-Lipschitz gradient, i.e.,
\begin{equation}
\|\nabla f(u) - \nabla f(v)\| \le L\|u - v\| \quad \forall u, v \in \mathbb{R}^d.
\end{equation}
Then for any random variable $X$ with finite second moment, the Jensen gap 
satisfies:
\begin{equation}
\delta_f(x) := f(\mathbb{E}[X|x]) - \mathbb{E}[f(X|x)]
\le \frac{L}{2}\operatorname{Tr}(\operatorname{Cov}[X|x]).
\end{equation}
\end{lemma}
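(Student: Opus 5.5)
The approach is a second-order Taylor bound around the conditional mean, using the descent-lemma form of the $L$-Lipschitz-gradient assumption. Write $\mu := \mathbb{E}[X\mid x]$ and interpret $\mathbb{E}[f(X\mid x)]$ as $\mathbb{E}[f(X)\mid x]$.

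The key inequality is the two-sided quadratic bound implied by $L$-smoothness:
\begin{equation}
\bigl|f(u) - f(\mu) - \langle \nabla f(\mu), u-\mu\rangle\bigr| \le \tfrac{L}{2}\|u-\mu\|^2 \quad \forall u\in\mathbb{R}^d .
\end{equation}
To prove it, I apply the fundamental theorem of calculus along the segment from $\mu$ to $u$:
\[
f(u)-f(\mu)-\langle\nabla f(\mu),u-\mu\rangle=\int_0^1\langle \nabla f(\mu+s(u-\mu))-\nabla f(\mu),\,u-\mu\rangle\,ds .
\]
By Cauchy--Schwarz and the Lipschitz bound, the integrand is at most $Ls\|u-\mu\|^2$ in absolute value, and integrating over $s$ gives the factor $\tfrac12$.

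Next I substitute $u=X$ and take conditional expectations given $x$. The linear term vanishes because $\mathbb{E}[X-\mu\mid x]=0$; this is legitimate since finite second moment implies finite first moment. The lower half of the bound gives
\[
\mathbb{E}[f(X)\mid x]\ge f(\mu)-\tfrac{L}{2}\,\mathbb{E}\bigl[\|X-\mu\|^2\mid x\bigr].
\]
Rearranging,
\[
\delta_f(x)=f(\mu)-\mathbb{E}[f(X)\mid x]\le \tfrac{L}{2}\,\mathbb{E}\bigl[\|X-\mu\|^2\mid x\bigr].
\]
Finally, $\mathbb{E}[\|X-\mu\|^2\mid x]=\operatorname{Tr}(\operatorname{Cov}[X\mid x])$, since the trace of the covariance is the sum of the coordinate variances.

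The upper half of the bound also yields $\bigl|f(\mu)-\mathbb{E}[f(X)\mid x]\bigr|\le \tfrac{L}{2}\operatorname{Tr}(\operatorname{Cov}[X\mid x])$. This absolute-value form is what the main text actually uses, since there $\delta_f$ is defined as a norm. I would remark on this, and note that for vector-valued $f$ the same argument applies componentwise, or along the segment with operator norms.

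The main technical point is integrability. I need $\mathbb{E}[|f(X)|\mid x]<\infty$ so that the conditional expectation is well defined. This also follows from the quadratic bound: $|f(X)|\le |f(\mu)|+\|\nabla f(\mu)\|\,\|X-\mu\|+\tfrac L2\|X-\mu\|^2$, which is integrable under the finite-second-moment hypothesis. Beyond that, the argument is routine.
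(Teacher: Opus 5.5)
Your proof is correct and follows the paper's overall plan: expand $f(X)$ around $\mu=\mathbb{E}[X\mid x]$, let the linear term vanish under conditional expectation, bound the remainder by $\tfrac{L}{2}\|X-\mu\|^2$, and identify $\mathbb{E}[\|X-\mu\|^2\mid x]$ with $\operatorname{Tr}(\operatorname{Cov}[X\mid x])$.

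The difference is in how you control the remainder. The paper uses the second-order Taylor formula with Hessian remainder $\int_0^1(1-s)(X-\mu)^\top\nabla^2 f(\cdot)(X-\mu)\,ds$. It then invokes $\|\nabla^2 f\|\le L$ ``in the sense of weak derivatives''. As written, this presumes $f\in C^2$. A function that is merely $C^1$ with Lipschitz gradient has a Hessian only almost everywhere, which says nothing about the particular segment from $\mu$ to $X$. Repairing it means replacing $v^\top\nabla^2 f(\cdot)v$ by the a.e.\ second derivative of $s\mapsto f(\mu+sv)$.

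Your route is the first-order representation $\int_0^1\langle\nabla f(\mu+s(u-\mu))-\nabla f(\mu),u-\mu\rangle\,ds$ plus Cauchy--Schwarz. This buys several things:
\begin{itemize}
\item It uses exactly the stated hypothesis and nothing more.
\item It yields the two-sided bound $|\delta_f(x)|\le\tfrac{L}{2}\operatorname{Tr}(\operatorname{Cov}[X\mid x])$ at once.
\item It carries over essentially verbatim to vector-valued $f$ whose Jacobian is Lipschitz in operator norm. That is the setting in which the main text's Proposition~1 actually uses the lemma, since there $\delta_f$ is a norm.
\item You also check conditional integrability of $f(X)$, which the paper takes for granted.
\end{itemize}

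What the paper's Hessian form gives in return is an exact curvature-weighted expression for the gap, not just a bound. For $C^2$ functions with a concentrated posterior, it identifies the leading term $-\tfrac12\operatorname{Tr}\bigl(\nabla^2 f(\mu)\operatorname{Cov}[X\mid x]\bigr)$. For the lemma as stated, your argument is the cleaner one.
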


\begin{proof}
We proceed by Taylor expansion and the Lipschitz gradient property.

\textbf{Step 1: Second-order Taylor expansion.}
Expanding $f$ around $\mathbb{E}[X|x]$ with remainder term:
\begin{align}
f(X) = f(\mathbb{E}[X|x]) 
&+ \nabla f(\mathbb{E}[X|x])^\top(X - \mathbb{E}[X|x]) \nonumber \\
&+ \int_0^1 (1-s)(X - \mathbb{E}[X|x])^\top 
  \nabla^2 f(\mathbb{E}[X|x] + s(X - \mathbb{E}[X|x]))(X - \mathbb{E}[X|x]) \, ds.
\label{eq:taylor_expansion}
\end{align}

\textbf{Step 2: Taking conditional expectation.}
Taking $\mathbb{E}[\cdot | x]$ on both sides of \eqref{eq:taylor_expansion}:
\begin{align}
\mathbb{E}[f(X)|x] = f(\mathbb{E}[X|x]) 
&+ \nabla f(\mathbb{E}[X|x])^\top \mathbb{E}[X - \mathbb{E}[X|x]|x] \nonumber \\
&+ \mathbb{E}\left[\int_0^1 (1-s)(X - \mathbb{E}[X|x])^\top 
  \nabla^2 f(\cdot)(X - \mathbb{E}[X|x]) \, ds \,\bigg| x\right].
\label{eq:exp_taylor}
\end{align}

The second term vanishes since $\mathbb{E}[X - \mathbb{E}[X|x]|x] = 0$. Thus:
\begin{equation}
\delta_f(x) = f(\mathbb{E}[X|x]) - \mathbb{E}[f(X)|x]
= -\mathbb{E}\left[\int_0^1 (1-s)(X - \mathbb{E}[X|x])^\top 
  \nabla^2 f(\cdot)(X - \mathbb{E}[X|x]) \, ds \,\bigg| x\right].
\label{eq:jensen_gap_integral}
\end{equation}

\textbf{Step 3: Bounding the Hessian norm.}
The $L$-Lipschitz gradient property implies that for any $u, v \in \mathbb{R}^d$:
\begin{equation}
\|\nabla f(u) - \nabla f(v)\| \le L\|u - v\|.
\end{equation}
This ensures that the Hessian (in the sense of weak derivatives) 
satisfies the spectral bound:
\begin{equation}
\|\nabla^2 f(u)\| \le L,
\end{equation}
where $\|\cdot\|$ denotes the operator norm (largest eigenvalue).

\textbf{Step 4: Upper bounding the quadratic form.}
For any vector $v \in \mathbb{R}^d$:
\begin{align}
v^\top \nabla^2 f(u) v &\le \|\nabla^2 f(u)\| \|v\|^2 \nonumber \\
&\le L \|v\|^2.
\label{eq:quadratic_bound}
\end{align}

Therefore:
\begin{align}
(X - \mathbb{E}[X|x])^\top \nabla^2 f(\cdot)(X - \mathbb{E}[X|x]) 
&\le L\|X - \mathbb{E}[X|x]\|^2.
\end{align}

\textbf{Step 5: Taking expectation.}
Applying conditional expectation:
\begin{align}
\mathbb{E}\left[(X - \mathbb{E}[X|x])^\top \nabla^2 f(\cdot)
(X - \mathbb{E}[X|x]) \,\bigg| x\right] 
&\le L\,\mathbb{E}\left[\|X - \mathbb{E}[X|x]\|^2 \,\bigg| x\right] \nonumber \\
&= L\,\operatorname{Tr}(\operatorname{Cov}[X|x]).
\label{eq:exp_quadratic}
\end{align}

\textbf{Step 6: Final bound.}
Substituting back into \eqref{eq:jensen_gap_integral}:
\begin{align}
|\delta_f(x)| &\le \mathbb{E}\left[\int_0^1 (1-s) 
  L\|X - \mathbb{E}[X|x]\|^2 \, ds \,\bigg| x\right] \nonumber \\
&= L\,\operatorname{Tr}(\operatorname{Cov}[X|x]) \int_0^1 (1-s) \, ds \nonumber \\
&= \frac{L}{2}\operatorname{Tr}(\operatorname{Cov}[X|x]).
\label{eq:jensen_final}
\end{align}
This completes the proof.
\end{proof}

\subsection{Sampling Time Overhead Analysis}
\label{sampling time ana}

In practical applications, as shown in the experiment section we have found that a small number of Monte-Carlo sampling times already yields satisfactory results. To save computational time, we notice that computing the gradients with respect to multiple samples \( x_{t-1} \) can be parallelized. We generally can choose the largest possible number of samples to ensure parallelization. In this case, Table~\ref{tab:time} reports the number of diffusion-model denoising iterations completed per second under different values of $M$ for the image inverse problem.

In addition, a variety of acceleration techniques have been developed to speed up sampling in diffusion models \citep{song2021ddim, dpm, dpm+,lipman2022flow}.
Among them, DDIM and flow matching approaches can introduce stochasticity at every sampling step, we have therefore verified experimentally that our method remains compatible with these strategies, rendering the resulting computational overhead entirely acceptable.

\begin{table}[h]
\centering
\caption{The inference time cost for different $M$.}
\small
\begin{tabular*}{\linewidth}{@{\extracolsep{\fill}}cccccc@{}}
\toprule
DPS/DSG    & ABMS($M=1$) & ABMS($M=2$) & ABMS($M=3$) & ABMS($M=4$) & ABMS($M=5$)\\ \midrule
4.8 iter/s & 2.5 iter/s & 2.1 iter/s & 1.7 iter/s & 1.5 iter/s & 1.3 iter/s\\ \bottomrule
\end{tabular*}
\label{tab:time}
\end{table}

\subsection{Additional Experiment Results}
\label{ablation:exp}
In this section, we present some additional experimental results.
As shown in Figure~\ref{fig:guidedwrite}, the content guidance accurately corrects the trajectory structure of the generated characters, verifying the effectiveness of guidance method.

\begin{figure}[!h]
    \centering
    \includegraphics[width=0.85\textwidth]{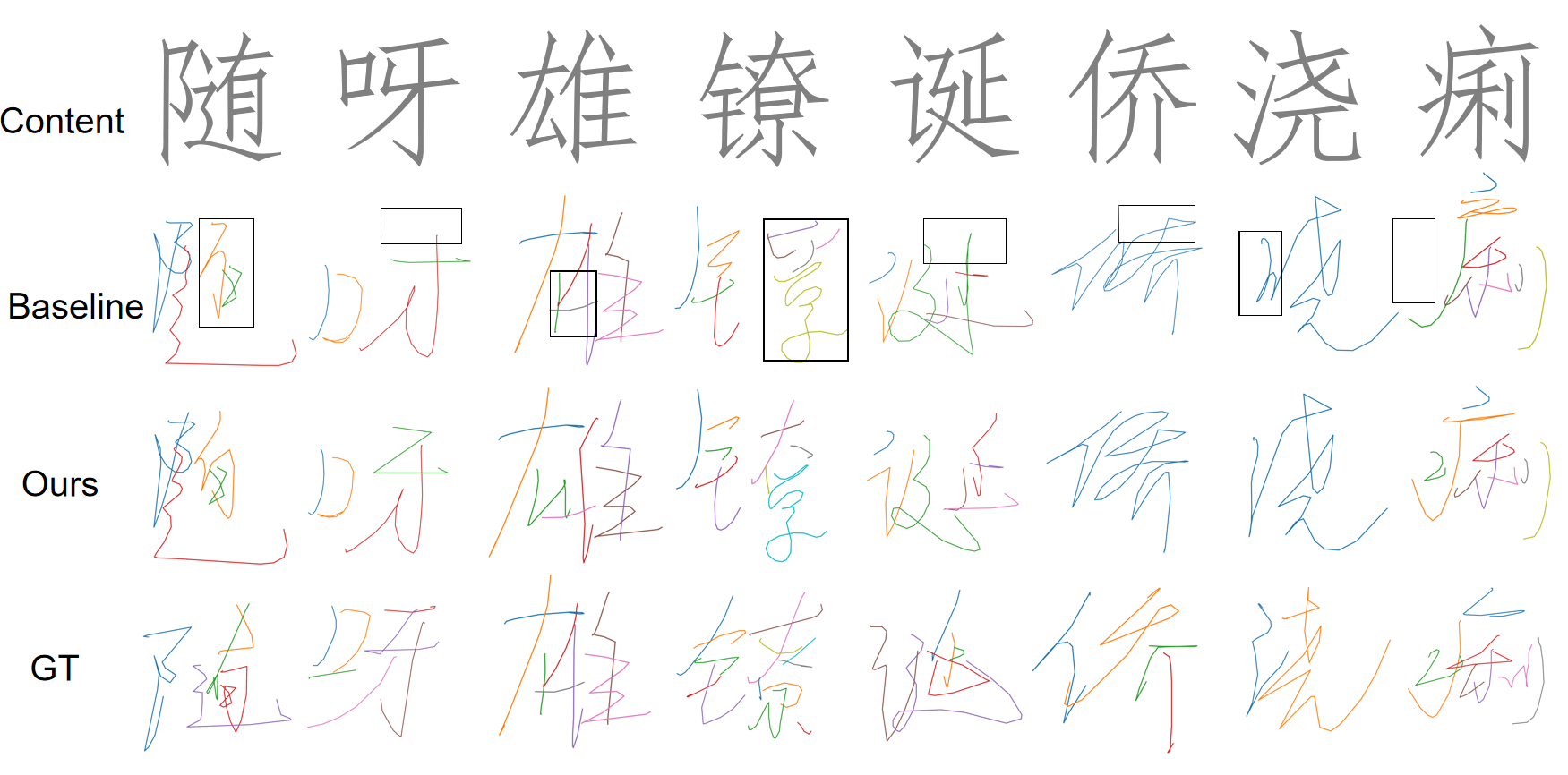}
    \caption{Qualitative result of content guidance. Compared to baseline model, the guidance we applied corrects the detailed strokes of the generated characters, making the character structures more accurate. The parts with structural errors are circled in a black box.}
    \label{fig:guidedwrite}
\end{figure}

Figure~\ref{fig:gaussian_deblurring} displays the experimental results for the Gaussian deblurring task discussed in the main text. We also provide the visualization of the linear inverse problems as shown in Figure~\ref{fig:linear}. It can be observed that our improved strategy exhibits fewer artifacts and enhances the image quality.

\begin{figure}[!h]
    \centering
    \includegraphics[width=0.75\textwidth]{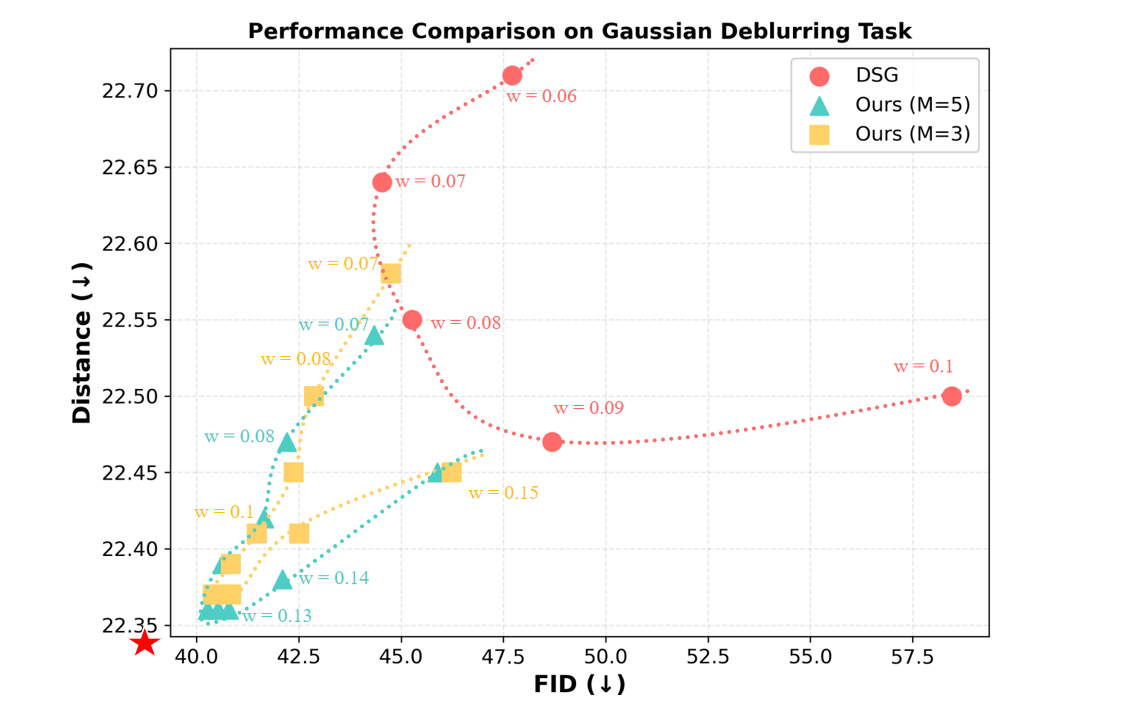}
    \caption{Performance curves of Distance Metric vs FID Metric on Gaussian Deblurring task.}
    \label{fig:gaussian_deblurring}
\end{figure}

We also evaluate our proposed methods on another nonlinear task: \textbf{FaceID guidance}. We utilize the pre-trained diffusion model from CelebA-HQ 256×256, provided by Freedom~\citep{yu2023freedom}. We pick 1000 samples from the FFHQ test set~\citep{karras2019style} as the reference images. To guide the process for an input reference image \(I\), we apply the FaceID loss, which is defined as:
\begin{equation}
    \mathcal{L}(\hat{x}_0(x_t), I) = CE(E(\hat{x}_0(x_t)), E(I)),
\end{equation}
where \(E(\cdot)\) denotes the face recognition network~\citep{deng2019arcface} and $CE$ represents the Cross-Entropy Loss. We employ the DDIM sampler with 100 sampling steps.

Figures~\ref{fig:faceid} presents the visual results of our method compared to other approaches. As can be seen, our method produces clearer details and higher image quality while ensuring adherence to the conditional guidance.

\begin{figure}[!h]
    \centering
    \includegraphics[width=1.0\textwidth]{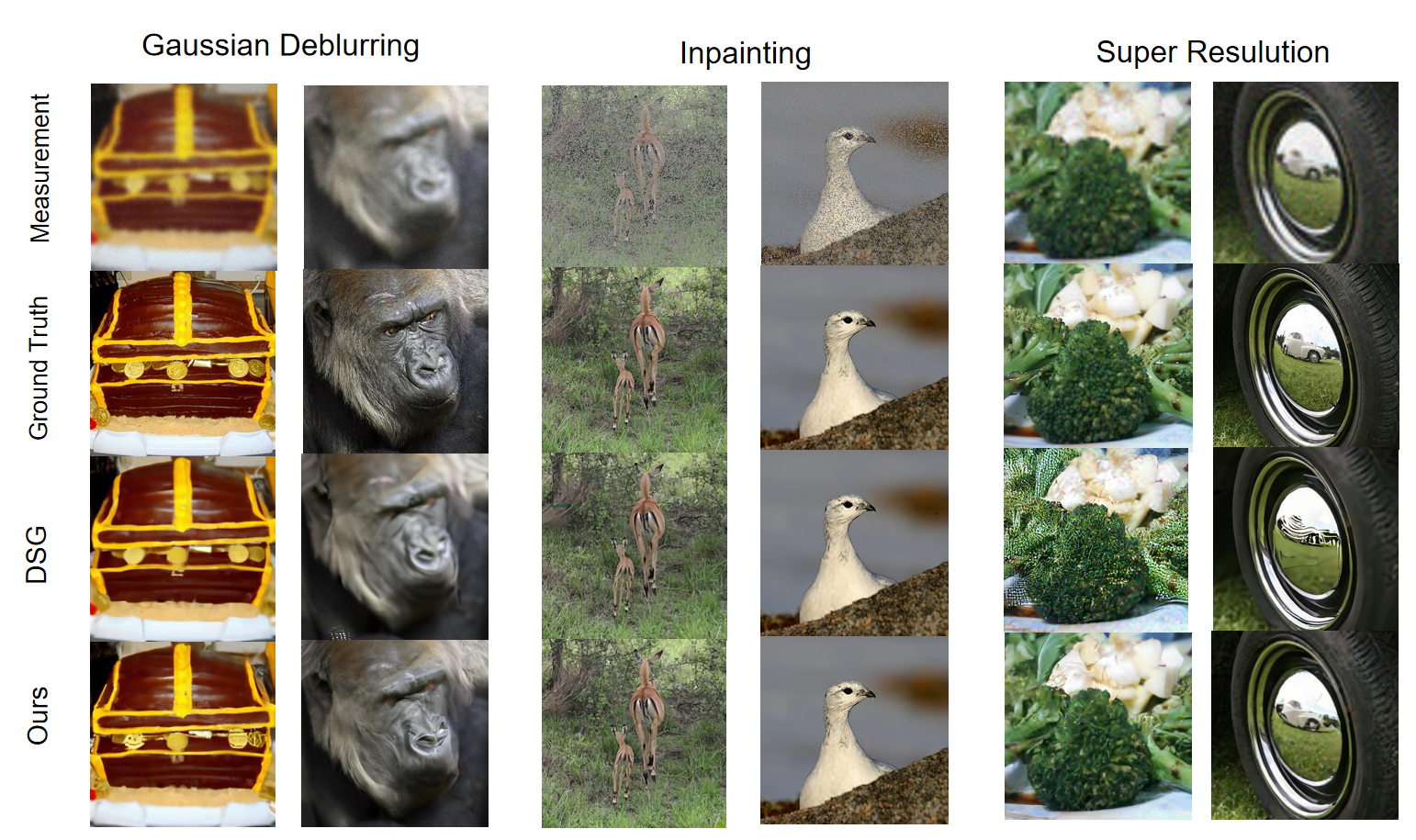}
    \caption{Qualitative result of linear inverse problems. Compared to existing methods, our improvements slightly enhance the image quality.}
    \label{fig:linear}
\end{figure}

\begin{figure}[!h]
  \centering
   \includegraphics[width=1.0\textwidth]{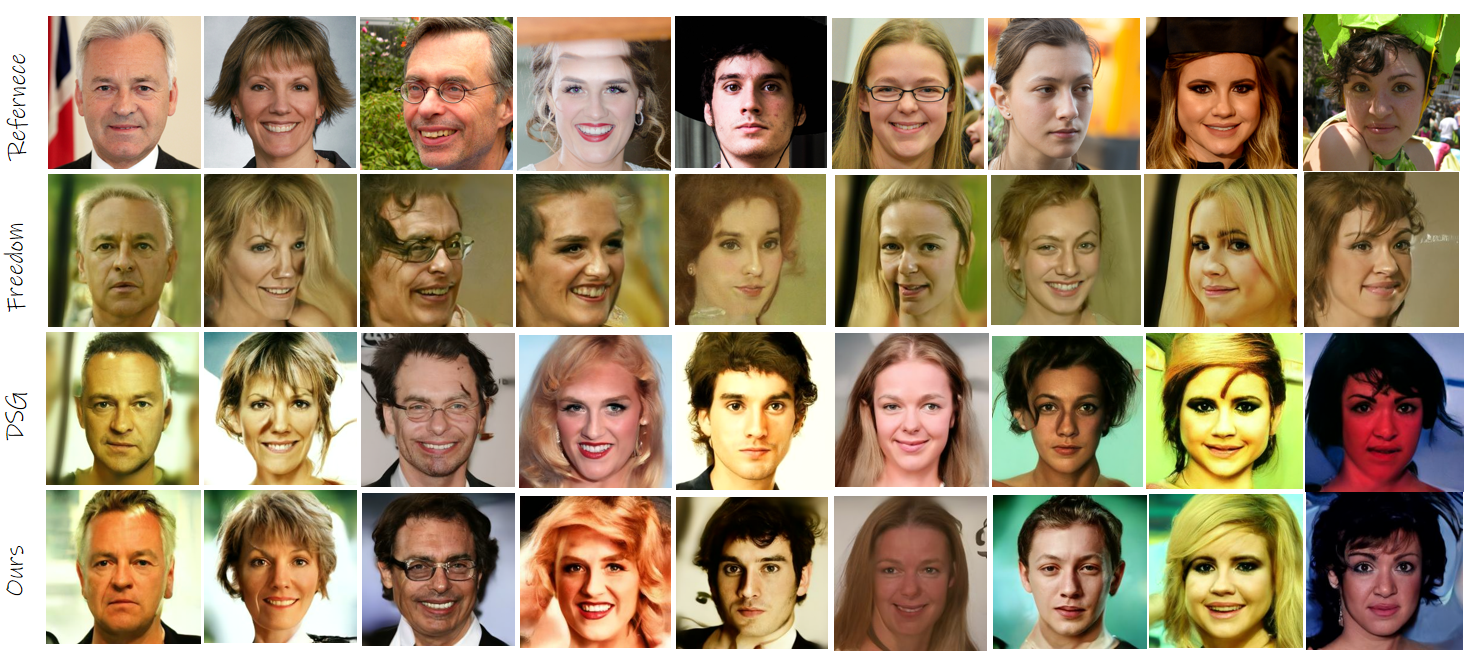}
    \caption{Qualitative result of FaceID guidance. Compared with existing methods, our improved approach generates more smooth results and is less likely to produce incoherent regions.}
    \label{fig:faceid}
\end{figure}

\end{document}